\documentclass{article}

\usepackage{arxiv}

\usepackage[utf8]{inputenc}
\usepackage[T1]{fontenc}
\usepackage{url}
\usepackage{booktabs}
\usepackage{nicefrac}
\usepackage{microtype}
\usepackage{graphicx}
\usepackage{subcaption}
\usepackage{makecell}
\usepackage{threeparttable}
\usepackage{tabularx}

\usepackage{amsmath}
\usepackage{amsfonts}
\usepackage{amssymb}
\usepackage{amsbsy}
\usepackage{amsthm}
\usepackage{algorithm}
\usepackage{algorithmic}

\usepackage{natbib}
\usepackage[pdftex,colorlinks=true,urlcolor=blue,citecolor=black,anchorcolor=black,linkcolor=black]{hyperref}
\usepackage{doi}

\newtheorem{theorem}{Theorem}

\newtheorem{proposition}[theorem]{Proposition}

\title{ANNEALED ENTROPIC ALLOCATION FOR RANKING AND SELECTION}

\author{
\href{https://orcid.org/0000-0003-4817-0773}{\includegraphics[scale=0.06]{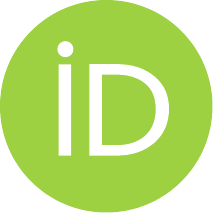}\hspace{1mm}Xin Fei} \\
Business School\\
The University of Edinburgh, Edinburgh, UK \\
\texttt{xfei@ed.ac.uk} \\
\And
\href{https://orcid.org/0000-0002-4343-5878}{\includegraphics[scale=0.06]{orcid.pdf}\hspace{1mm}Juergen Branke} \\
Warwick Business School\\
The University of Warwick, Coventry, UK\\
\texttt{Juergen.Branke@wbs.ac.uk}
}

\renewcommand{\shorttitle}{Annealed Entropic Allocation}

\hypersetup{
pdftitle={ANNEALED ENTROPIC ALLOCATION FOR RANKING AND SELECTION},
pdfsubject={Ranking and selection; pure exploration; large deviations},
pdfauthor={Fei, Branke},
pdfkeywords={Ranking and Selection, Large Deviations, Pure Exploration},
}

\begin{document}
\maketitle

\begin{abstract}
We propose \textit{annealed entropic allocation}, an adaptive sampling policy based on an annealed, weighted soft-min formulation of static budget allocation. We replace the maximin large-deviation rate objective with a weighted log-sum-exp surrogate that blends challenger-specific pairwise scores through soft-min weights, avoiding hard switching when several challengers are nearly active. To capture tail behavior beyond the leading exponent, the surrogate incorporates saddlepoint prefactors from refined pairwise tail asymptotics. Because these corrections are subexponential, decreasing the annealing temperature with the budget preserves the same first-order target allocation. For the static problem, we prove uniform convergence to the hard minimum, concentration of soft-min weights on active challengers, and continuity of the induced target-allocation map under fixed weights. Experiments show that the proposed methods are consistently competitive: the no-saddlepoint ablation performs best in symmetric Gaussian and exponential slippage settings, while saddlepoint weighting can help in heterogeneous or asymmetric cases.
\end{abstract}

\keywords{Ranking and Selection \and Large Deviations \and Pure Exploration}

\section{INTRODUCTION}
\label{sec:intro}

Simulation is a key tool for the design and analysis of complex stochastic systems across domains ranging from supply chain management to energy systems. In many of these settings, the objective is to identify the best design from a finite set of alternatives. This problem is known as \emph{ranking and selection} (R\&S). Because high-fidelity simulation replications can be computationally expensive---for example, a single run of a large-scale power grid or turbine blade design model may take hours---the fundamental challenge is to allocate a limited simulation budget efficiently in order to maximize the probability of correct selection~(PCS).

Existing R\&S methods differ both in the type of guarantees they provide and in how they determine sequential allocations. Indifference-zone procedures~\citep{kim2001fully} provide finite-sample frequentist guarantees: they attain a target PCS whenever the best alternative is separated from the rest by at least a user-specified indifference parameter. Some procedures use asymptotic approximations to bound the probability of correct selection, then derive tractable two-stage or sequential allocation rules~\citep{chickNewTwoStageSequential2001} based on these bounds. Others allocate the next replication according to a one-step value-of-information~\citep{chick2010sequential,frazier2008knowledge}. Optimal computing budget allocation (OCBA) methods~\citep{chen2010stochastic} also use asymptotic approximations to derive target allocation ratios and sequentially track them.

Large-deviation (LD) analysis further sharpens this asymptotic perspective. For static allocations, \citet{glynn2004large} characterized the exponential decay rate of the false-selection probability and showed that rate-optimal sampling proportions solve a maximin problem over pairwise LD rates. \citet{gaoNewBudgetAllocation2017a} extended this framework to the expected-opportunity-cost criterion. Beyond the leading exponent, \citet{shi2024series} developed a Bahadur--Rao-type series expansion for PCS and proposed the FCBA policy based on this refinement. In parallel, adaptive methods seek to approach the LD-optimal allocation without explicitly solving the static maximin problem. For example, \citet{chen2023bold} introduced the balancing optimal large deviations (BOLD), which tracks the LD-optimal allocation by maintaining the total- and individual-balance conditions. \citet{qin2025dual} used dual variables and an information-directed selection rule to track the Karush--Kuhn--Tucker conditions. Recent pure-exploration work, including the Frank-Wolfe self-play approach of \citet{liu2025pure} and references therein, addresses the same nonsmooth maximin structure from a game-based Frank-Wolfe perspective.

The LD perspective provides a natural first-order asymptotic objective. Let \(p\in\Delta^{K-1}\) denote a static allocation over the \(K\) designs, where \(\Delta^{K-1}:=\{p\in\mathbb{R}_+^K:\sum_i p_i=1\}\), and let \(C_{b,j}(p)\) denote the pairwise LD rate of the event that challenger \(j\) is incorrectly ranked above the true best design~\(b\). \citet{glynn2004large} solve the static maximin allocation problem
\begin{equation}
    \max_{p\in\Delta^{K-1}}\min_{j\neq b} \ C_{b,j}(p).
     \label{eq:hardmin_obj}
\end{equation}
This characterization is valuable because it identifies the correct first-order asymptotic target. The objective~\eqref{eq:hardmin_obj}, however, optimizes a nonsmooth hard minimum over pairwise LD rates. Existing approaches such as BOLD, dual tracking, and Frank-Wolfe Self-Play retain this hard-min structure, and manage the resulting nonsmoothness through their update mechanisms. In contrast, this work proposes to handle the nonsmoothness by replacing the hard minimum with an annealed entropic soft-min surrogate. This distinction is important when several challengers have comparable, near-minimal rates, because small estimation errors or finite-sample fluctuations can change the apparent least-favorable challenger, causing abrupt shifts in the target allocation and potentially oscillatory adaptive sampling behavior. Moreover, in finite-budget regimes, first-order LD rates may be too coarse to discriminate reliably among such competing challengers. These considerations motivate the annealed entropic soft-min allocation objective, with saddlepoint prefactors used as a finite-budget refinement of the challenger weights when lower-order pairwise tail information is informative. The entropic weights distribute attention across near-active challengers, while annealing gradually reduces the smoothing parameter so that the surrogate approaches the hard-min behavior of the classical maximin formulation. Our main contributions are as follows.
\begin{itemize}
    \item We introduce an annealed, weighted soft-min surrogate for the maximin LD objective, replacing the hard minimum over challengers with a smooth, temperature-controlled aggregation. We set the challenger weights using saddlepoint prefactors from refined pairwise tail asymptotics; because these prefactors are subexponential, they refine finite-budget weighting among near-active challengers without changing the first-order allocation target.
  
    \item We establish key structural properties of the surrogate: it recovers the hard minimum under annealing, its Gibbs weights concentrate on the active challengers, and, for fixed challenger weights, the induced target-allocation map is continuous on the interior of the simplex.

    \item We develop \textit{annealed entropic allocation} (AEA), an adaptive policy that turns the surrogate target-allocation map into a sequential procedure by using plug-in estimates and cumulative tracking. 
  
    \item We evaluate AEA and an ablated no-saddlepoint variant on Gaussian and exponential instances. The ablation is particularly strong in symmetric slippage instances, where the true saddlepoint prefactor ratios across challengers are equal at the symmetric target; AEA is strongest in heterogeneous or asymmetric cases, where the prefactor carries genuine lower-order information.
\end{itemize}

\section{THEORETICAL FRAMEWORK}
\label{sec:theory}

\subsection{Problem Formulation and the Large-Deviation Objective}

Consider $K$ stochastic alternatives $\mathcal{X}=\{1,\dots,K\}$. For each alternative $i\in\mathcal{X}$, observations are drawn independently from a one-parameter exponential family with density or mass function
\begin{equation}
    f_i(y;\theta_i)
    \;=\;
    \exp\!\big(y\,\theta_i - A_i(\theta_i) + c_i(y)\big),
    \label{eq:ef_model}
\end{equation}
where $\theta_i\in\Theta_i\subset\mathbb{R}$ is the natural parameter and $A_i$ is twice continuously differentiable on the interior of $\Theta_i$, with $A_i''(\theta_i) > 0$. The mean of alternative $i$ is $\mu_i = A_i'(\theta_i)$. Common distributions encompassed by \eqref{eq:ef_model} include Gaussian distributions with known variance, Bernoulli, Poisson, and exponential distributions. Let $b\in\mathcal{X}$ denote the true best alternative, so that $\mu_b > \mu_j, \forall\, j\neq b.$

We begin with the static allocation problem in the LD framework. Let $p=(p_1,\dots,p_K)\in\Delta^{K-1}=\{p\in\mathbb{R}_+^K:\sum_{i=1}^K p_i=1\}$ denote the target sampling proportions, where $p_i$ is the fraction of budget allocated to alternative $i$. For a static allocation, fixed-budget LD analysis shows that the probability of incorrect selection is governed at first order by the most difficult challenger, which leads to \eqref{eq:hardmin_obj}. To define the pairwise rate $C_{b,j}(p)$, let
$
    d_i(\theta,\vartheta)
    =
    A_i(\vartheta) - A_i(\theta) - A_i'(\theta)(\vartheta - \theta)
$
denote the Kullback--Leibler divergence from the distribution with natural parameter $\theta$ to that with parameter $\vartheta$ under the exponential family of alternative $i$. For $j\neq b$, define the \emph{Cramér projection}
\begin{equation}
    \vartheta^j
    \;\in\;
    \arg\min_{\vartheta:\, A_b'(\vartheta_b)\,\le\, A_j'(\vartheta_j)}
    \;\sum_{i=1}^K p_i\, d_i(\vartheta_i,\theta_i),
    \label{eq:Cramer_projection}
\end{equation}
which identifies the least-favorable parameter configuration under which challenger $j$ appears at least as good as the best alternative $b$. The pairwise LD rate is then
\begin{equation}
    C_{b,j}(p)
    \;=\;
    \sum_{i=1}^K p_i\, d_i(\vartheta_i^j,\theta_i).
    \label{eq:general_rate}
\end{equation}
For each fixed $\vartheta$, the map $p\mapsto\sum_{i} p_i\,d_i(\vartheta_i,\theta_i)$ is linear; since $C_{b,j}(p)$ is the infimum of such linear functions over the constraint set in \eqref{eq:Cramer_projection}, it is concave in $p$. Under standard regularity conditions ensuring that the projection is unique for $p$ in the interior of $\Delta^{K-1}$, the envelope theorem gives
\begin{equation}
    \frac{\partial C_{b,j}(p)}{\partial p_i}
    \;=\;
    d_i(\vartheta_i^j,\theta_i),
    \label{eq:envelope}
\end{equation}
so $\partial C_{b,j}(p)/\partial p_i$ is the marginal contribution of alternative $i$ to the pairwise error exponent against challenger $j$. Substituting \eqref{eq:envelope} into \eqref{eq:general_rate} yields
\[
    C_{b,j}(p)
    =
    \sum_{i=1}^K p_i\,\frac{\partial C_{b,j}(p)}{\partial p_i}.
\]

Importantly, each pairwise rate $C_{b,j}$ is smooth on the interior of the simplex; the non-smoothness in \eqref{eq:hardmin_obj} arises solely from the outer minimum over challengers. In the sequential allocation, estimation noise can further amplify active-set switching and lead to unstable allocation updates. This motivates the annealed soft-min framework developed next.

\subsection{The Annealed Weighted Soft-Min Surrogate}
\label{sec:theory_softmin}

For annealing level $t>0$ and $p$, define the \emph{annealed weighted soft-min surrogate} of objective \eqref{eq:hardmin_obj} by
\begin{equation}
    F_t(p)
    =
    -\frac{1}{t}
    \log\!\left(
        \sum_{j\neq b}
        \rho_{j,t}\,\exp\!\big(-t\, C_{b,j}(p)\big)
    \right),
    \label{eq:softmin}
\end{equation}
where $\rho_{j,t}>0$ are prescribed challenger weights whose choice is deferred to Section~\ref{sec:saddlepoint}. Here, \(t\) plays the role of an inverse-temperature parameter; equivalently, the corresponding temperature is \(1/t\), which decreases to zero as the budget grows. Equivalently,
\[
    F_t(p)
    =
    -\frac{1}{t}
    \log\!\left(
        \sum_{j\neq b}
        \exp\!\left\{
            -t\!\left(
                C_{b,j}(p) - \frac{1}{t}\log\rho_{j,t}
            \right)
        \right\}
    \right),
\]
so $\rho_{j,t}$ acts as lower-order corrections $-t^{-1}\log\rho_{j,t}$ to the pairwise exponents. For finite $t$, challengers with comparable adjusted rates all contribute non-negligibly to the exponential aggregation; as $t\to\infty$, the sum becomes dominated by the least favorable challenger. Thus \eqref{eq:softmin} replaces the hard outer minimum in \eqref{eq:hardmin_obj} by a smooth surrogate while preserving the same first-order LD target. Only the relative magnitudes of the weights matter for the induced target allocation. Multiplying all $\rho_{j,t}$ by the same positive factor $c_t$ changes $F_t$ by the additive term $-(1/t)\log c_t$, which is independent of $p$, and leaves the Gibbs weights defined below unchanged. This observation will be useful in Section~\ref{sec:saddlepoint}, where challenger-independent factors in the saddlepoint prefactor can be dropped without changing the target allocation.

For the fixed-weight asymptotic theory, the requirement is that the weights be subexponential:
\begin{equation}
    \max_{j\neq b}\big|\log \rho_{j,t}\big|
    =
    o(t).
    \label{eq:subexp}
\end{equation}
Under \eqref{eq:subexp}, for each fixed $p$ we have
$F_t(p)\to \min_{j\neq b} C_{b,j}(p)$ as $t\to\infty$. Because the number of challengers is finite, the convergence is uniform on $\Delta^{K-1}$:
\[
    \sup_{p\in\Delta^{K-1}}
    \left|
        F_t(p)-\min_{j\neq b}C_{b,j}(p)
    \right|
    \le
    \max_{j\neq b}\left|\frac{1}{t}\log\rho_{j,t}\right|
    + \frac{\log(K-1)}{t}.
\]
For \(p\)-dependent saddlepoint weights, convergence is pointwise for each fixed interior \(p\), and uniform under any allocation floor. Hence $\rho_{j,t}$ influences the finite-budget interpolation among challengers but does not alter the LD objective. 

For any $t$, if the weights are constants with respect to $p$, then the map $p\mapsto F_t(p)$ is well behaved on the interior of $\Delta^{K-1}$. If each pairwise rate $C_{b,j}(p)$ is concave and continuously differentiable in $p$, then $F_t$ is also concave and continuously differentiable, because the map
$p \mapsto \log \rho_{j,t} - t\,C_{b,j}(p)$ is convex. The log-sum-exp of convex functions is again convex, and hence $F_t$ is concave and continuously differentiable.

Differentiating \eqref{eq:softmin} with respect to $p_i$ gives
\begin{equation}
\frac{\partial F_t}{\partial p_i}(p)
=
\sum_{j\neq b}q_{j,t}(p)\frac{\partial C_{b,j}(p)}{\partial p_i},
\qquad
q_{j,t}(p)
=
\frac{\rho_{j,t}\exp\{-tC_{b,j}(p)\}}
{\sum_{k\neq b}\rho_{k,t}\exp\{-tC_{b,k}(p)\}}.
    \label{eq:softmin_gradient}
\end{equation}
$q_{j,t}(p)$ is the Gibbs weight on challenger $j$. The coefficients \(q_{j,t}(p)\) place more mass on challengers with smaller adjusted rates
$
C_{b,j}(p)-t^{-1}\log \rho_{j,t}.
$
Because these weights are nonnegative and sum to one, \(\nabla F_t(p)\) is a convex combination of the pairwise gradients \(\nabla C_{b,j}(p)\). This is the key smoothing effect of the soft minimum: instead of switching abruptly from one active challenger to another, as the hard minimum does, the weights shift continuously across challengers for every finite \(t\).

\subsection{Challenger-Specific Allocation and Target Allocation}
\label{sec:theory_target}

For $j\neq b$, define the challenger-specific allocation vector
\begin{equation}
    h_{i,j}(p)
    =
    \frac{p_i\,d_i(\vartheta_i^j,\theta_i) }{C_{b,j}(p)}
    =
    \frac{p_i\,\partial_i C_{b,j}(p)}{C_{b,j}(p)},
    \label{eq:hij}
\end{equation}
where the second equality uses the envelope property \eqref{eq:envelope}. Since
$ d_i(\vartheta_i^j,\theta_i)\ge 0$, and
$
    \sum_{i=1}^K p_i\,d_i(\vartheta_i^j,\theta_i)  = C_{b,j}(p),
$
we have $h_{i,j}(p)\ge 0$ and
$\sum_{i=1}^K h_{i,j}(p)=1$, so $h_j(p)\in\Delta^{K-1}$. For fixed weights $\rho_{j,t}$, with Gibbs weights $q_{j,t}(p)$ defined in \eqref{eq:softmin_gradient}, define 
\begin{equation}
    \alpha_{j,t}(p)
    =
    \frac{q_{j,t}(p)C_{b,j}(p)}
    {\sum_{k\neq b}q_{k,t}(p)C_{b,k}(p)} ,
    \qquad j\neq b.
    \label{eq:alpha_soft}
\end{equation}
The weighted soft-min target allocation map is then
\begin{equation}
    \Pi_{i,t}(p)
    =
    \frac{
        \sum_{j\neq b} q_{j,t}(p)\, C_{b,j}(p)\, h_{i,j}(p)
    }{
        \sum_{j\neq b} q_{j,t}(p)\, C_{b,j}(p)
    },
    \qquad i=1,\dots,K.
    \label{eq:pi_soft}
\end{equation}
Because the coefficients $q_{j,t}(p)\,C_{b,j}(p)$ are nonnegative and the denominator normalizes their sum, $\Pi_t(p)$ is a convex combination of the simplex vectors $h_j(p)$, and therefore $\Pi_t(p)\in\Delta^{K-1}$.
Using \eqref{eq:hij} and \eqref{eq:softmin_gradient},
\[
    p_i\,\partial_i F_t(p)
    =
    \sum_{j\neq b}
    q_{j,t}(p)\,p_i\,\partial_i C_{b,j}(p)
    =
    \sum_{j\neq b}
    q_{j,t}(p)\,C_{b,j}(p)\,h_{i,j}(p).
\]
Summing over $i$ gives
\[
    \sum_{k=1}^K p_k\,\partial_k F_t(p)
    =
    \sum_{j\neq b}
    q_{j,t}(p)\, C_{b,j}(p).
\]
Hence \eqref{eq:pi_soft} can be written equivalently as
\[
    \Pi_{i,t}(p)
    =
    \frac{
        p_i\,\partial_i F_t(p)
    }{
        \sum_{k=1}^K p_k\,\partial_k F_t(p)
    }.
\]
Thus the target allocation has two interpretations: it is a soft aggregation of challenger-specific allocation vectors, and it is also the normalized vector of current shares times marginal surrogate gains. When one challenger dominates the Gibbs weights, $\Pi_t(p)$ is close to that challenger's pairwise allocation $h_j(p)$; when several challengers are nearly active, $\Pi_t(p)$ averages them smoothly. 

The weighted soft-min construction is closely related to the stationarity structure of \eqref{eq:hardmin_obj}, but differs in how the target is produced and tracked. \citet{chen2023bold} track balance conditions, while \citet{qin2025dual} track dual stationarity of the nonsmooth problem, where $p_i^* = \sum_{j\neq b}\nu_j h_{i,j}(p^*)$ and \(\nu_j=0\) for inactive challengers. \citet{liu2025pure} also works directly with the hard-min objective by maintaining a mixed skeptic over challengers and analyzing the resulting dynamics. In contrast, our work replaces the hard minimum by the smooth \(F_t\), yielding the explicit target map \(\Pi_t(p)\), and then follows this target by cumulative tracking. The hard active-set multipliers are replaced by finite-\(t\) soft coefficients, which are positive for all challengers in the simplex interior and concentrate on the lowest-rate challengers as \(t\to\infty\).

A useful consequence of the pairwise structure is that under-sampled designs become rate-limiting. Since the constraint in \eqref{eq:Cramer_projection} involves only $b$ and $j$, the optimizer sets $\vartheta_i^j=\theta_i$ for $i\notin\{b,j\}$, so only the pair $(b,j)$ contributes to $C_{b,j}(p)$. In the exponential families considered here, one can choose a feasible $\bar\vartheta_j$ with $A_j'(\bar\vartheta_j)\ge \mu_b$, yielding
\[
    C_{b,j}(p) \le p_j\,d_j(\bar\vartheta_j,\theta_j).
\]
Hence $C_{b,j}(p)\to 0$ as $p_j\downarrow 0$; similarly, $C_{b,j}(p)\to 0$ as $p_b\downarrow 0$. Thus, if the sampling share of a relevant design becomes too small, the corresponding pairwise rate collapses and that design becomes active.

\subsection{Second-Order Saddlepoint Weights}
\label{sec:saddlepoint}
 
The remaining ingredient is the finite-budget choice of the weights $\rho_{j,t}$. We use a pairwise saddlepoint tail approximation. Because these weights depend on $p$ and on the parameter, the structural results for $F_t$ and $\Pi_t$ are stated under the convention that, when the map is analyzed as a function of $p$, the challenger weights are held fixed. In the adaptive policy, these weights are recomputed from the current plug-in estimates and empirical allocation before forming the next target. 

Under allocation \(p\), design \(i\) receives \(tp_i\) samples. For a challenger $j\neq b$, the pairwise error event is that the sample mean of \(j\) exceeds that of the true best design. Thus define
\[
    T_{j,t}:=\bar X_{b,tp_b}-\bar X_{j,tp_j},
    \qquad
    E_j=\{T_{j,t}\le0\},
\]
where \(\bar X_{i,tp_i}:=(tp_i)^{-1}\sum_{\ell=1}^{tp_i}X_{i,\ell}\). It is convenient to work with the scaled statistic \(S_{j,t}:=tT_{j,t}\). For \(s\) in the moment-generating-function domain, define the scaled cumulant-generating function \(\kappa_j\) by
\[
    \log \mathbb{E}\!\left[e^{sS_{j,t}}\right]
    =
    t\,\kappa_j(s;\,p,\theta).
\]
Let \(\hat s_j=\hat s_j(p,\theta)\) denote the saddlepoint for the threshold zero, i.e.,
\[
    \kappa_j'(\hat{s}_j;\,p,\theta)=0.
\]
Since \(\kappa_j'(0;\,p,\theta)=\mu_b-\mu_j>0\) while \(E_j=\{S_{j,t}\le0\}\) is a left-tail event, the saddlepoint satisfies \(\hat s_j<0\). The LD rate is
\(C_{b,j}(p)=-\kappa_j(\hat{s}_j;\,p,\theta)\). Under standard light-tail regularity conditions~\citep{dembo2009large}, we have
\begin{equation}
    \mathbb{P}(E_j)
    \sim
    \rho_{j,t}^{\mathrm{sp}}\exp\{-tC_{b,j}(p)\},
    \qquad
    \rho_{j,t}^{\mathrm{sp}}
    :=
    \frac{1}{|\hat s_j|\,\sqrt{2\pi\, t\,\kappa_j''(\hat s_j;\,p,\theta)}} .
    \label{eq:sp_proportional}
\end{equation}
Only challenger weight ratios enter the Gibbs normalization~\eqref{eq:softmin_gradient}, so challenger-independent factors such as $(2\pi t)^{-1/2}$ may be omitted in implementation. With or without these common factors, for each fixed interior \(p\) the weights are subexponential in \(t\), so the first-order LD target is unchanged.

In common cases, including the Gaussian and exponential settings used in our experiments, the saddlepoint prefactor has closed-form expressions. Retaining the common \(t^{-1/2}\) factor for consistency with \eqref{eq:sp_proportional}, and omitting only the challenger-independent factor \((2\pi)^{-1/2}\), we have
\[
\text{Gaussian:}\quad
\frac{\sqrt{\sigma_b^2/p_b+\sigma_j^2/p_j}}{(\mu_b-\mu_j)\sqrt{t}},
\qquad
\text{Exponential:}\quad
\frac{\mu_jp_b+\mu_bp_j}{(\mu_b-\mu_j)\sqrt{t\,p_bp_j(p_b+p_j)}}.
\]
When no closed form is available, the same prefactor can still be computed with little overhead. Since \(\kappa_j''(s)>0\), the derivative \(\kappa_j'(s)\) is strictly increasing on its domain, so the saddlepoint \(\hat s_j\) is obtained from the one-dimensional equation \(\kappa_j'(\hat s_j)=0\). This root solve can be warm-started across iterations.

To see why this correction matters, consider the Gibbs weight ratio between two challengers $j$ and~$k$,
\begin{equation}
    \frac{q_{j,t}(p)}{q_{k,t}(p)}
    =
    \frac{\rho_{j,t}^{\mathrm{sp}}}{\rho_{k,t}^{\mathrm{sp}}}\,
    \exp\!\big(-t\,(C_{b,j}(p)-C_{b,k}(p))\big).
    \label{eq:gibbs_ratio_sp}
\end{equation}
When \(C_{b,j}(p) \approx C_{b,k}(p)\), the exponential factor is close to one, so the saddlepoint ratio determines the relative Gibbs mass. A challenger with larger $\rho_{j,t}^{\mathrm{sp}}$ has a larger refined tail approximation, receives more Gibbs mass in the soft-min target, and therefore has greater influence on the plug-in target allocation. Thus, when the pre-exponential factors differ meaningfully across near-active challengers, the saddlepoint correction provides a finite-\(t\) tie-breaker precisely in the regime where the hard minimum is most sensitive to small changes in the pairwise rates. 

\section{ANNEALED ENTROPIC ALLOCATION}
\label{sec:algorithm}

AEA is an adaptive plug-in policy based on the weighted soft-min framework. In the implementation, the annealing level in \eqref{eq:softmin} is not tuned separately but is set equal to the elapsed simulation budget. Let $t_0:=Kn_0$ denote the initialization budget, and let $t\ge t_0$ be the total number of samples collected so far. At each round, AEA computes plug-in rates and saddlepoint prefactors at the current allocation, and then evaluates the plug-in target allocation map~\eqref{eq:pi_soft}. At finite $t$, the Gibbs weights retain mass on near-active challengers, while as $t$ grows, equivalently as the temperature $1/t$ decreases to zero, the soft-min sharpens and the surrogate approaches the LD target. Other temperature scalings would have the same first-order limit but are left for future tuning studies.

Let $N_{i,t}$ be the number of samples allocated to alternative $i$ after $t$ total samples, $N_t=(N_{1,t},\dots,N_{K,t})$, and $p_t=N_t/t$. To convert the target sequence $\{\pi_t\}_{t\ge t_0}$ into a sampling rule, we use \emph{cumulative tracking}. Let $W_t=(W_{1,t},\dots,W_{K,t})$ denote the cumulative target counts, initialized at $W_{t_0}=N_{t_0}$. For $t\ge t_0$, set $W_{t+1}=W_t+\pi_t$ and sample $I_{t+1}\in\arg\max_i\{W_{i,t+1}-N_{i,t}\}$. Thus AEA selects the alternative whose realized count lags furthest behind its cumulative target. Algorithm~\ref{alg:AEA} summarizes the full procedure. The following proposition gives the tracking guarantee.

\begin{proposition}
\label{prop:tracking}
Under the cumulative-tracking recursion above, for all $t\ge t_0$,
$
\|N_t-W_t\|_\infty \le K-1.
$
Consequently, for all \(t>t_0\),
\[
\left\|
\frac{N_t-N_{t_0}}{t-t_0}
-\frac{1}{t-t_0}\sum_{\ell=t_0}^{t-1}\pi_\ell
\right\|_\infty
\le \frac{K-1}{t-t_0}.
\]
In particular, if $\frac{1}{t-t_0}\sum_{\ell=t_0}^{t-1}\pi_\ell \to \bar\pi$, then $p_t=N_t/t \to \bar\pi$.
\end{proposition}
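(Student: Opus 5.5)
The plan is to track the discrepancy vector $D_t := W_t - N_t$ and show that its coordinates stay in a bounded window. First, the invariant $\sum_i D_{i,t} = 0$ holds for all $t \ge t_0$. It holds at $t_0$ because $W_{t_0} = N_{t_0}$. At each step $W$ gains $\pi_t\in\Delta^{K-1}$, which has total mass one, and $N$ gains the unit vector $e_{I_{t+1}}$, also of total mass one. The recursion is therefore
\[
D_{t+1} = D_t + \pi_t - e_{I_{t+1}},
\qquad
I_{t+1} \in \arg\max_i D_{i,t}+\pi_{i,t},
\]
which is exactly the rule $\arg\max_i\{W_{i,t+1}-N_{i,t}\}$.

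The core step is a lower and an upper bound on the coordinates of $D_t$; I would prove both by induction. For the lower bound I claim $D_{i,t} \ge -1$ for all $i$ and $t$. A coordinate can only decrease when it is the sampled index. In that case $D_{i,t}+\pi_{i,t}$ is the maximum of the vector $D_t+\pi_t$, whose coordinates sum to $1$, so the maximum is at least $1/K \ge 0$. Hence $D_{i,t+1} = D_{i,t}+\pi_{i,t}-1 \ge -1$. If $i$ is not sampled, then $D_{i,t+1} \ge D_{i,t} \ge -1$. The base case is $D_{t_0}=0$.

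For the upper bound, combine the zero-sum invariant with the lower bound: $D_{i,t} = -\sum_{k\ne i} D_{k,t} \le K-1$. Together these give $-1 \le D_{i,t} \le K-1$, so $\|N_t-W_t\|_\infty \le K-1$, using $K\ge2$ so that $1\le K-1$.

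The remaining two statements are bookkeeping. Unrolling the recursion gives $W_t - N_{t_0} = \sum_{\ell=t_0}^{t-1}\pi_\ell$. Therefore
\[
N_t - N_{t_0} - \sum_{\ell=t_0}^{t-1}\pi_\ell = N_t - W_t,
\]
and dividing by $t-t_0$ gives the displayed bound. For the final claim, write
\[
\frac{N_t}{t} = \frac{N_{t_0}}{t} + \frac{t-t_0}{t}\cdot\frac{N_t-N_{t_0}}{t-t_0}.
\]
The first term tends to $0$, the factor $(t-t_0)/t$ tends to $1$, and the tracking error tends to $0$ at rate $(K-1)/(t-t_0)$. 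Hence $p_t \to \bar\pi$ whenever the Cesàro averages of $\pi_\ell$ converge to $\bar\pi$.

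The main obstacle is getting the lower bound $D_{i,t}\ge -1$ right: the argmax must be taken after adding $\pi_t$, so the maximum is at least the average $1/K$. Once that is in place, the upper bound follows immediately from the zero-sum identity.
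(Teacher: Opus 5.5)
Your proposal is correct and is essentially the paper's proof with the sign of the discrepancy flipped. The paper sets $D_t = N_t - W_t$ and shows $D_{i,t} < 1$ using the same fact that the maximum of the post-update vector (whose entries sum to $1$) is at least $1/K$, then gets the other side from the zero-sum invariant, just as you do with $W_t - N_t \ge -1$; the unrolling $W_t = N_{t_0} + \sum_{\ell=t_0}^{t-1}\pi_\ell$ and the decomposition of $N_t/t$ for the limit are identical, and your explicit use of $K\ge 2$ makes precise a point the paper leaves implicit.
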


\begin{proof}
Let \(D_t:=N_t-W_t\), with components \(D_{i,t}=N_{i,t}-W_{i,t}\). Since \(D_{t_0}=0\) and the update
\[
D_{i,t+1}=D_{i,t}-\pi_{i,t}+\mathbf{1}\{i=I_{t+1}\}
\]
preserves the sum because \(\sum_i \pi_{i,t}=1\), we have \(\sum_i D_{i,t}=0\) for all \(t\ge t_0\).
We claim that \(D_{i,t}<1\) for all \(i,t\). This is true at \(t=t_0\). If \(i\neq I_{t+1}\), then
\(D_{i,t+1}=D_{i,t}-\pi_{i,t}\le D_{i,t}<1\). If~\(i=I_{t+1}\), then, by the selection rule,
\[
D_{I_{t+1},t+1}=1-\max_j\bigl(-D_{j,t}+\pi_{j,t}\bigr)<1,
\]
since the \(K\) terms \(-D_{j,t}+\pi_{j,t}\) sum to \(1\), so their maximum is at least \(1/K>0\). Hence \(D_{i,t}<1\) for \(i,t\). Together with \(\sum_i D_{i,t}=0\), this gives
\[
-(K-1)<D_{i,t}=-\sum_{j\ne i}D_{j,t}<1,
\]
and therefore
\(\|N_t-W_t\|_\infty\le K-1\).
Finally, \(W_t=N_{t_0}+\sum_{\ell=t_0}^{t-1}\pi_\ell\), so
\[
\left\|
\frac{N_t-N_{t_0}}{t-t_0}
-\frac{1}{t-t_0}\sum_{\ell=t_0}^{t-1}\pi_\ell
\right\|_\infty
=
\frac{\|N_t-W_t\|_\infty}{t-t_0}
\le \frac{K-1}{t-t_0}.
\]
If \((t-t_0)^{-1}\sum_{\ell=t_0}^{t-1}\pi_\ell\to\bar\pi\), then
\((N_t-N_{t_0})/(t-t_0)\to\bar\pi\). Since
\[
p_t=N_t/t=N_{t_0}/t+((t-t_0)/t)\,(N_t-N_{t_0})/(t-t_0),
\]
it follows that
\(p_t\to\bar\pi\).
\end{proof}

Proposition~\ref{prop:tracking} shows that the realized allocation follows the cumulative sequence of plug-in targets up to an $O(1/t)$ error in proportions. It does not establish convergence of the adaptive policy to the LD-optimal allocation or exponential PICS decay. A full adaptive fixed-budget analysis is left for future work.

\begin{algorithm}[t]
\caption{Annealed Entropic Allocation (AEA)}
\label{alg:AEA}
\begin{algorithmic}[1]
\REQUIRE Budget $T$, initial replications per alternative $n_0$
\STATE Sample each alternative $n_0$ times; set $t \leftarrow Kn_0$ and $W_t \leftarrow N_t$
\FOR{$t=Kn_0,\dots,T-1$}
    \STATE Update plug-in estimates $\hat\theta_{i,t}$ and empirical allocation $p_t = N_t/t$
    \STATE Set $b_t \leftarrow \arg\max_i A_i'(\hat\theta_{i,t})$ 
    \FOR{each challenger $j\neq b_t$}
        \STATE Compute plug-in Cramér projection $\hat\vartheta_t^j$ via \eqref{eq:Cramer_projection}
        \STATE Compute rate $\hat C_{b_t,j}(p_t)$ via \eqref{eq:general_rate} and allocation vector $\hat h_j(p_t)$ via \eqref{eq:hij}
        \STATE Compute the saddlepoint prefactor $\hat\rho_{j,t}$ via \eqref{eq:sp_proportional}; set log-weight: $w_j \leftarrow \log \hat\rho_{j,t} - t\,\hat C_{b_t,j}(p_t)$
    \ENDFOR
    \STATE Compute plug-in Gibbs weights and target via \eqref{eq:softmin_gradient} and \eqref{eq:pi_soft}; set $W_{t+1}\leftarrow W_t+\pi_t$
    \STATE Select and sample $I_{t+1} \leftarrow \arg\max_i(W_{i,t+1}-N_{i,t})$; update statistics and counts
\ENDFOR
\RETURN $\arg\max_i A_i'(\hat\theta_{i,T})$
\end{algorithmic}
\end{algorithm}

\section{ASYMPTOTIC PROPERTIES}
\label{sec:analysis}

This section establishes the first-order static asymptotic properties of the weighted soft-min objective evaluated at the true parameter.

\subsection{Preservation of the First-Order Large-Deviation Target}

\begin{proposition}
\label{prop:first_order}
Let the positive weights \(\rho_{j,t}\) be independent of \(p\), let
$L_t := \max_{j\neq b}|\log\rho_{j,t}|$, and assume $L_t = o(t)$. Then
\begin{equation}
    \sup_{p\in\Delta^{K-1}} |F_t(p) - m(p)|
    \le
    \frac{L_t + \log(K-1)}{t}
    \to 0,
    \label{eq:uniform_bound}
\end{equation}
where $m(p) := \min_{j\neq b} C_{b,j}(p).$
If each $C_{b,j}$ is continuous on $\Delta^{K-1}$ and the allocation problem in \eqref{eq:hardmin_obj} admits a unique optimizer, then any sequence of maximizers of $F_t$ over $\Delta^{K-1}$ converges to the LD-optimal allocation.
\end{proposition}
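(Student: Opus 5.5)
The argument has two parts: a pointwise sandwich for the log-sum-exp, which gives the uniform bound~\eqref{eq:uniform_bound}, and then a standard argmax-consistency argument on the compact simplex.

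\emph{Uniform bound.} Fix $p\in\Delta^{K-1}$ and write $m=m(p)$. Every weight satisfies $e^{-L_t}\le\rho_{j,t}\le e^{L_t}$, and every exponent satisfies $e^{-tC_{b,j}(p)}\le e^{-tm}$.

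For the upper bound on the sum, use
\[
\sum_{j\neq b}\rho_{j,t}e^{-tC_{b,j}(p)}\le (K-1)e^{L_t}e^{-tm}.
\]
For the lower bound, keep only the term with $C_{b,j}(p)=m$ (a minimizing challenger), which gives
\[
\sum_{j\neq b}\rho_{j,t}e^{-tC_{b,j}(p)}\ge e^{-L_t}e^{-tm}.
\]
Taking $-\frac1t\log$ of both inequalities yields
\[
m-\frac{L_t+\log(K-1)}{t}\le F_t(p)\le m+\frac{L_t}{t}.
\]
Neither side depends on $p$, so~\eqref{eq:uniform_bound} follows. The bound tends to zero because $L_t=o(t)$ and $K$ is fixed. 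On the boundary of the simplex some rates may vanish, but they stay finite and nonnegative, so the sandwich is unaffected.

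\emph{Convergence of maximizers.} Let $p^*$ be the unique maximizer of $m$, and let $\varepsilon_t$ denote the right-hand side of~\eqref{eq:uniform_bound}. Existence of maximizers of $F_t$ holds because $F_t$ is continuous whenever the $C_{b,j}$ are, and the simplex is compact; in any case we only need to treat an arbitrary sequence of maximizers $p_t$.

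First, $m(p_t)\ge m(p^*)-2\varepsilon_t$. This comes from chaining three inequalities:
\[
m(p_t)\ge F_t(p_t)-\varepsilon_t\ge F_t(p^*)-\varepsilon_t\ge m(p^*)-2\varepsilon_t.
\]
Next, suppose $p_t\not\to p^*$. By compactness of $\Delta^{K-1}$ there is a subsequence converging to some $\bar p\neq p^*$. Since $m$ is a minimum of finitely many continuous functions, it is continuous, so $m(\bar p)\ge m(p^*)$. Then $\bar p$ is also a maximizer, contradicting uniqueness. Hence every subsequence has a further subsequence converging to $p^*$, and so $p_t\to p^*$.

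The only potentially delicate step is the continuity of $m$ up to the boundary of the simplex. This is covered directly by the hypothesis that each $C_{b,j}$ is continuous on $\Delta^{K-1}$, so I expect no real obstacle.
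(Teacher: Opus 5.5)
Your proof is correct and follows the paper's argument essentially verbatim. The paper factors out $e^{-tm(p)}$ and sandwiches the remaining sum between $\min_{j\neq b}\rho_{j,t}$ and $(K-1)\max_{j\neq b}\rho_{j,t}$, which is your two-sided bound. It then invokes the standard argmax-consistency argument (uniform convergence, continuity of $m$, compactness, uniqueness), which you spell out in more detail.
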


\begin{proof}
Rewriting $F_t$ gives
\[
    F_t(p)
    =
    m(p)
    -
    \frac{1}{t}
    \log\!\sum_{j\neq b}
    \rho_{j,t}\,
    \exp\!\big(-t\,(C_{b,j}(p)-m(p))\big).
\]
Since $C_{b,j}(p)-m(p)\ge 0$ for every $j$ and equality holds for at least one challenger,
\[
    \min_{j\neq b}\rho_{j,t}
    \le
    \sum_{j\neq b}\rho_{j,t}\,
    \exp\!\big(-t\,(C_{b,j}(p)-m(p))\big)
    \le
    (K-1)\,\max_{j\neq b}\rho_{j,t}.
\]
Taking $-t^{-1}\log(\cdot)$ and using $|\log\rho_{j,t}|\le L_t$ yields \eqref{eq:uniform_bound}. The convergence of maximizers then follows from uniform convergence on the compact set $\Delta^{K-1}$, continuity of $m$, and uniqueness of optimal allocation.
\end{proof}

Proposition~\ref{prop:first_order} confirms that saddlepoint prefactors affect only the finite-\(t\) weighting of challengers in the surrogate: as long as they remain sub-exponential, they do not alter the limiting first-order target.

\subsection{Regularity of the Target Allocation Map}

The following result assumes that the weights $\rho_{j,t}$ are fixed positive constants independent of $p$.

\begin{proposition}
\label{prop:continuity}
Fix $t<\infty$ and positive weights $\rho_{j,t}$. Suppose each $C_{b,j}(p)$ is continuously differentiable and strictly positive on the interior of $\Delta^{K-1}$, and satisfies $\sum_{i=1}^K p_i\,\partial_i C_{b,j}(p)=C_{b,j}(p)$. Then $p\mapsto \Pi_t(p)$ is continuous on the interior of $\Delta^{K-1}$. If, in addition, each $C_{b,j}$ is twice continuously differentiable with bounded Hessian on a compact subset $\mathcal{K}$ of the interior of $\Delta^{K-1}$, then $p\mapsto \Pi_t(p)$ is Lipschitz on $\mathcal{K}$.
\end{proposition}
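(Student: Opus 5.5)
The plan is to use the second representation of the target map derived in Section~\ref{sec:theory_target}, namely
\[
\Pi_{i,t}(p)=\frac{p_i\,\partial_i F_t(p)}{\sum_{k=1}^K p_k\,\partial_k F_t(p)},
\]
together with the explicit form
\[
\Pi_{i,t}(p)=\frac{N_i(p)}{D(p)},
\qquad
N_i(p)=\sum_{j\neq b} q_{j,t}(p)\,p_i\,\partial_i C_{b,j}(p),
\qquad
D(p)=\sum_{j\neq b} q_{j,t}(p)\,C_{b,j}(p).
\]
The second equality of the last display uses the Euler-type identity $\sum_i p_i\partial_i C_{b,j}=C_{b,j}$, which is assumed in the statement. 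Continuity and Lipschitz continuity then reduce to the same properties for the numerators and the denominator, plus a positive lower bound on the denominator.

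\textbf{Step 1 (continuity of the pieces).} Since the $\rho_{j,t}$ are fixed positive constants and each $C_{b,j}$ is continuous on the interior, the Gibbs weights
\[
q_{j,t}(p)=\frac{\rho_{j,t}e^{-tC_{b,j}(p)}}{\sum_{k\neq b}\rho_{k,t}e^{-tC_{b,k}(p)}}
\]
are continuous. Their denominator is a finite sum of strictly positive continuous terms, so it never vanishes. The maps $p\mapsto p_i\,\partial_i C_{b,j}(p)$ are continuous because $C_{b,j}\in C^1$. Hence every $N_i$ and $D$ is continuous.

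\textbf{Step 2 (denominator bounded away from zero).} Because $q_{j,t}>0$, $\sum_j q_{j,t}=1$, and $C_{b,j}(p)>0$ on the interior, we get
\[
D(p)\ge \min_{j\neq b}C_{b,j}(p)>0 .
\]
So $\Pi_t=N/D$ is a quotient of continuous functions with a nonvanishing denominator, and is therefore continuous on the interior. This gives the first claim.

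\textbf{Step 3 (Lipschitz on compact $\mathcal K$).} On $\mathcal K$ the $C^2$ assumption gives uniform bounds on $C_{b,j}$, on $\nabla C_{b,j}$ (by continuity on a compact set), and on the Hessian (by assumption). Since $\mathcal K$ is compact and each $C_{b,j}$ is continuous and positive there, $\min_{p\in\mathcal K}\min_j C_{b,j}(p)=:c_0>0$, so $D\ge c_0$ on $\mathcal K$. I would then bound gradients factor by factor:
\begin{itemize}
\item $\nabla q_{j,t}=t\,q_{j,t}\bigl(\sum_k q_{k,t}\nabla C_{b,k}-\nabla C_{b,j}\bigr)$ is bounded by $2t\max_k\sup_{\mathcal K}\|\nabla C_{b,k}\|$. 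Here the fixed finite $t$ is essential.
\item $\nabla\bigl(p_i\partial_i C_{b,j}\bigr)=e_i\,\partial_iC_{b,j}+p_i\nabla\partial_iC_{b,j}$ is bounded using the bounded gradient and Hessian and $p_i\le1$.
\end{itemize}
Products and sums of bounded-gradient functions have bounded gradients, so $N_i$ and $D$ are $C^1$ with bounded gradients on $\mathcal K$. The quotient rule with $D\ge c_0$ then bounds $\nabla\Pi_{i,t}$ on $\mathcal K$.

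\textbf{Main obstacle: passing from a gradient bound to a Lipschitz bound.} The mean value theorem requires segments between points, and a general compact $\mathcal K$ need not be convex. I would first work on a convex compact neighborhood, such as $\mathcal K_\delta=\{p\in\Delta^{K-1}:p_i\ge\delta\}$ containing $\mathcal K$, provided the Hessian bound is available there. Failing that, I would use the standard argument: $\Pi_t$ is locally Lipschitz on the open interior, so Lipschitz on small balls around points of $\mathcal K$. For pairs with $\|p-p'\|\ge r$, with $r$ a Lebesgue number of a finite subcover, use $\|\Pi_t(p)-\Pi_t(p')\|\le 2\le (2/r)\|p-p'\|$, since $\Pi_t$ takes values in the simplex. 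Combining the two cases gives a global Lipschitz constant on $\mathcal K$.
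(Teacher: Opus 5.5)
Your proposal is correct and follows the paper's route: the quotient form $\Pi_{i,t}=p_i\partial_iF_t/\sum_k p_k\partial_kF_t$, a denominator $\sum_{j\neq b}q_{j,t}C_{b,j}\ge\min_{j}C_{b,j}>0$ that is bounded away from zero on $\mathcal{K}$, and Lipschitz control of the softmax weights and of $p_i\partial_iC_{b,j}$ from the gradient and Hessian bounds. Your extra treatment of non-convex $\mathcal{K}$ (a convex floor set $\mathcal{K}_\delta$ or a Lebesgue-number covering argument) fills a step the paper skips, since it simply asserts that a bounded Hessian makes $\nabla C_{b,j}$ Lipschitz on $\mathcal{K}$.
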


\begin{proof}
Since the weights are fixed, the gradient of $F_t$ is
\[
    \partial_i F_t(p)
    =
    \sum_{j\neq b} q_{j,t}(p)\,\partial_i C_{b,j}(p),
\]
which is continuous whenever $C_{b,j}$ is continuously differentiable. The denominator of $\Pi_t$ satisfies
\[
    \sum_{k=1}^K p_k\,\partial_k F_t(p)
    =
    \sum_{j\neq b} q_{j,t}(p)\, C_{b,j}(p)
    > 0.
\]
Thus continuity of $\Pi_t$ follows from the quotient representation. On a compact subset $\mathcal{K}$ of the interior, the denominator is continuous and strictly positive, hence bounded away from zero. The bounded-Hessian assumption implies that each $\nabla C_{b,j}$ is Lipschitz on $\mathcal{K}$; since $\mathcal{K}$ is compact, the gradients are also bounded, and the softmax weights $q_{j,t}(p)$ are Lipschitz functions of $p$. Hence $\nabla F_t=\sum_{j\neq b}q_{j,t}\nabla C_{b,j}$ is Lipschitz on $\mathcal{K}$, and the quotient formula for \(\Pi_t\) gives the Lipschitz claim.
\end{proof}

Proposition~\ref{prop:continuity} formalizes the main structural advantage of the entropic smoothing: the allocation target varies continuously with the underlying rate geometry. In contrast, hard minimum-based allocation rules can exhibit discontinuous jumps whenever the identity of the worst-case challenger changes.

\subsection{Concentration of Gibbs Weights}

The Gibbs weights \eqref{eq:softmin_gradient} asymptotically concentrate on the \emph{active challenger set}
\[
    \mathcal{A}(p)
    =
    \left\{
        j\neq b:
        C_{b,j}(p) = \min_{k\neq b}\, C_{b,k}(p)
    \right\}.
\]
For any non-active challenger $j\notin\mathcal{A}(p)$ and any active challenger $k\in\mathcal{A}(p)$,
\[
    \frac{q_{j,t}(p)}{q_{k,t}(p)}
    =
    \frac{\rho_{j,t}}{\rho_{k,t}}\,
    \exp\!\big(-t\,(C_{b,j}(p)-C_{b,k}(p))\big)
    \to 0,
\]
since $C_{b,j}(p)-C_{b,k}(p)>0$ and the weight ratio is subexponential. Hence $q_{j,t}(p)\to 0$ for all $j\notin\mathcal{A}(p)$, and $\sum_{j\in\mathcal{A}(p)} q_{j,t}(p)\to 1$. Thus the Gibbs weights asymptotically identify the same active challenger set as the hard minimum, while preserving a smooth finite-budget interpolation.

When the weights are chosen to be the saddlepoint prefactors from Section~\ref{sec:saddlepoint}, they do not change the limiting active set, because their ratios are subexponential. Their effect is finite-budget: among challengers with nearly equal rates, the ratio $\rho_{j,t}^{\mathrm{sp}}/\rho_{k,t}^{\mathrm{sp}}$ determines how the Gibbs mass is split. Taken together, the results show that the surrogate preserves the LD target and yields a smooth target map. Extending these statements to the adaptive policy remains an open direction for future work.

\section{NUMERICAL EXPERIMENTS}
\label{sec:experiments}

We evaluate AEA and its no-saddlepoint-prefactor ablation, AEA-noSP, on the Gaussian and exponential instances in Table~\ref{tab:test_instances}. AEA-noSP uses the same annealing schedule and cumulative-tracking rule as AEA, but sets \(\rho_{j,t}\equiv 1\) in the Gibbs weights, thereby isolating the effect of the annealed soft-min aggregation from the saddlepoint prefactor. Gaussian variances are assumed known. Exponential rewards are parameterized by their means, \(X_i \sim\mathrm{Exp}(\mathrm{mean}=\mu_i)\), so larger \(\mu_i\) indicates a better alternative. For the Gaussian instances, we benchmark against three LD-based methods--BOLD~\citep{chen2023bold}, OEA~\citep{gaoNewBudgetAllocation2017a}, and FCBA~\citep{shi2024series}--two top-two Thompson sampling variants--TS-KKT-IDS~\citep{qin2025dual} and TTTS ($\beta=1/2$)~\citep{russo2020simple}--as well as standard Thompson sampling~(TS), OCBA with most-starving allocation (OCBA-MSA)~\citep{chen2010stochastic}, and equal allocation. All TS-based methods use flat priors for Gaussian means and independent \(\mathrm{Gamma}(1,1)\) shape-rate priors on inverse means for exponential rewards. TS-KKT-IDS draws \(\tilde\mu\), sets \(b=\arg\max_i\tilde\mu_i\), chooses \(j=\arg\min_{k\ne b}C_{b,k}(p_t;\tilde\mu)\), and samples according to \(h_j(p_t;\tilde\mu)\); for exponential instances, \(C_{b,j}\) and \(h_j\) are computed using the fixed-budget KL direction \(d(x,\mu)=x/\mu-1-\log(x/\mu)\). For TTTS, we cap the number of challenger-sampling attempts at \(1{,}000\), since repeated posterior sampling becomes expensive as the posterior concentrates. 

Each algorithm is initialized with $5$ replications per alternative. For each instance, we run $50{,}000$ independent macro-replications. For each method and instance, we run independent macro-replications under the same initialization rule. At each sampling budget $t$, the recommended design is the design with the largest empirical mean. Performance is measured by the empirical probability of incorrect selection ($\mathrm{PICS}$). All plots report $\mathrm{PICS}$ against the budget $t$, with a logarithmic vertical axis; the curves are used as empirical comparisons rather than as proof of exponential decay.

\begin{table}[!t]
\centering
\caption{Gaussian and exponential test instances.}
\label{tab:test_instances}
\begingroup
\footnotesize
\setlength{\tabcolsep}{4pt}
\renewcommand{\arraystretch}{1.12}
\begin{tabularx}{\linewidth}{@{}l c >{\raggedright\arraybackslash}X@{}}
\toprule
\textbf{Instance} & \textbf{$K$} & \textbf{Specification} \\
\midrule
\multicolumn{3}{@{}l}{\textit{Gaussian instances}} \\
G1 
& 10 
& $\mu_i=i-1,\quad \sigma_i^2=36,\quad i=1,\ldots,10.$ \\
G2 
& 10 
& $\mu_1=1,\quad \mu_i=0\ \text{for } i=2,\ldots,10,\quad \sigma_i^2=36.$ \\
G3 
& 10 
& $\mu_i=\left((10-i)/4\right)^2,\quad 
   (\sigma_1^2,\ldots,\sigma_{10}^2)
   =(36,36,36,36,36,36,72,144,216,36).$ \\
G4 
& 100 
& $\mu_i=(i-1)/10,\quad \sigma_i^2=1,\quad i=1,\ldots,100.$ \\
\addlinespace[1pt]
\multicolumn{3}{@{}l}{\textit{Exponential instances}} \\
E1 
& 10 
& $\boldsymbol{\mu}
   =(1.00,0.95,0.90,0.84,0.78,0.70,0.62,0.56,0.52,0.48).$ \\
E2 
& 10 
& $\mu_1=1.00,\quad \mu_i=0.90\ \text{for } i=2,\ldots,10.$ \\
\bottomrule
\end{tabularx}
\endgroup
\end{table}

\subsection{Summary of Numerical Findings}

\begin{figure}[!t]
\centering
\begingroup
\setlength{\tabcolsep}{2pt}
\newcommand{\gpanel}[2]{%
  \begin{minipage}[t]{0.47\linewidth}
  \centering
  \includegraphics[width=\linewidth,trim={1pt 1pt 1pt 1pt},clip]{#2}\\
  {\footnotesize #1}
  \end{minipage}%
}
\begin{tabular}{@{}cc@{}}
\gpanel{(a) G1}{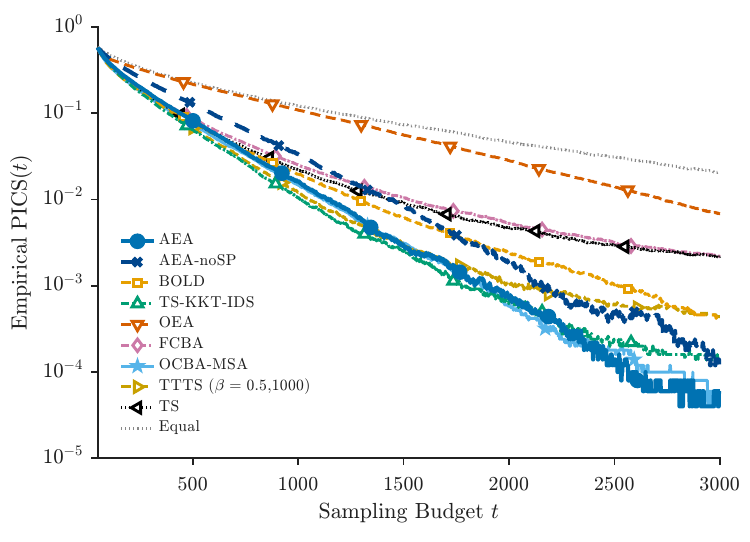}
&
\gpanel{(b) G2}{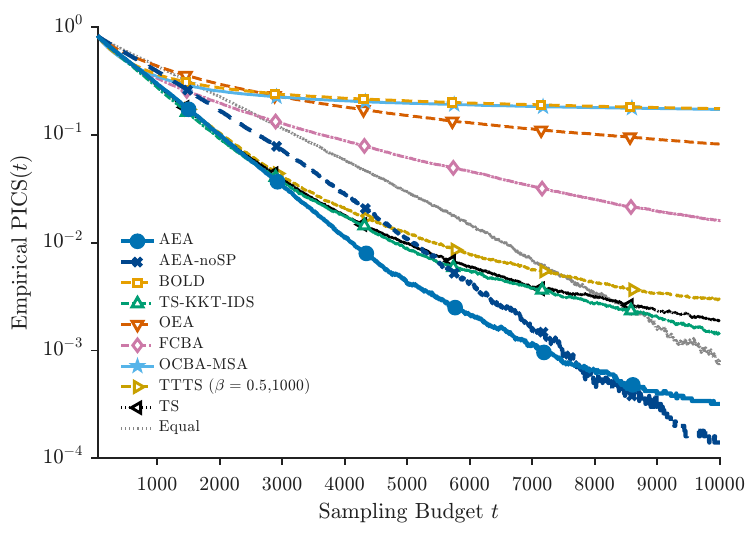}
\\
\gpanel{(c) G3}{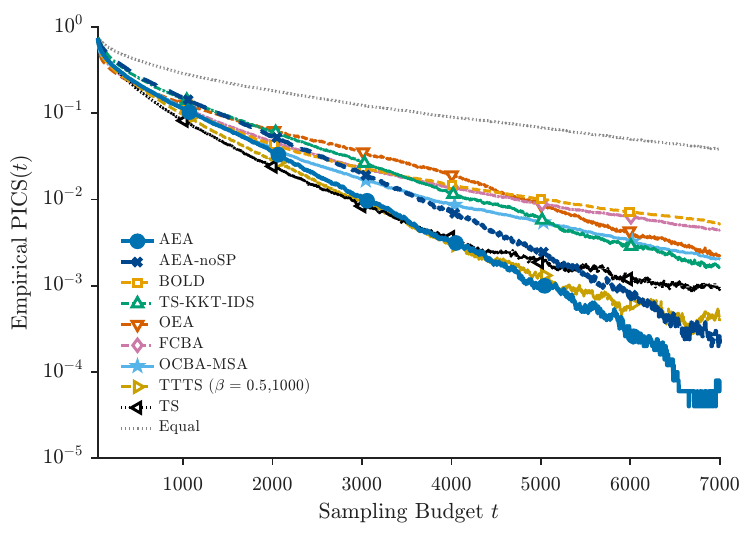}
&
\gpanel{(d) G4}{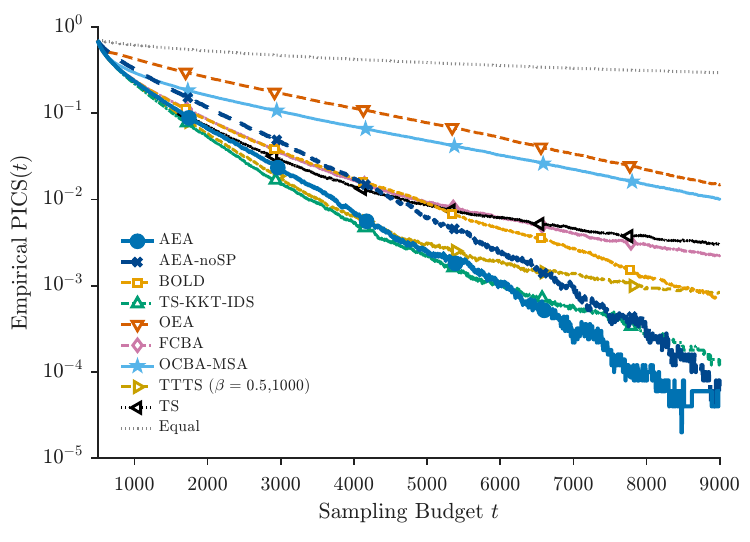}
\end{tabular}
\endgroup
\caption{Empirical PICS on Gaussian test instances.}
\label{fig:gaussian_results}
\end{figure}

Figures~\ref{fig:gaussian_results}--\ref{fig:exp_results} report the finite-budget PICS, Table~\ref{tab:pics_wilson} gives terminal PICS estimates with 95\% Wilson intervals, and Table~\ref{tab:cpu_time} reports computation times. On the log scale, nearly linear segments are LD-like, whereas slope changes reflect finite-budget transients and rare slow-recovery paths rather than clean exponential decay. Such slow recovery is most visible in the slippage instances G2 and E2, where unfavorable early estimates can make the empirical best design incorrect and delay reallocation. Nevertheless, the proposed soft-min methods remain the strongest overall: one of AEA or AEA-noSP attains the smallest terminal PICS estimate in five of the six instances, and on G1 the smallest point estimate is not significantly separated from the proposed methods. The Holm-adjusted tests in Table~\ref{tab:pics_wilson} show that AEA-noSP is significantly best on G2 and E2, full AEA is significantly best on G3 and E1, while G1 and G4 have no unique significant winner.

On the Gaussian instances, AEA and AEA-noSP are consistently in the leading group. On G1, OCBA-MSA has the smallest terminal estimate, but not significantly below AEA, AEA-noSP, or TS-KKT-IDS. On symmetric slippage G2, AEA-noSP achieves the smallest terminal PICS significantly, with AEA as the next-best method. Equal allocation becomes relatively competitive only at larger budgets: it is inefficient but never starves any alternative, so its static error probability has a clean exponential-decay interpretation; however, it remains worse than both proposed variants at the terminal budget. This supports annealed soft-min aggregation: for nearly exchangeable challengers, smoothing attention across near-active alternatives is more stable than hard active-challenger selection. Moreover, in G2 the true Gaussian prefactors are equal at equal challenger shares, so AEA-noSP avoids finite-budget plug-in prefactor imbalances rather than discarding useful challenger-specific information. On G3, where variances are heterogeneous, full AEA is significantly best, showing that the saddlepoint prefactor can help when near-active challengers differ in scale. On G4, AEA and AEA-noSP have essentially identical terminal performance, with TS-KKT-IDS close behind. Overall, the Gaussian results show that soft-min allocation is robust, while the prefactor is most valuable under genuine heterogeneity.

The exponential results show the same qualitative pattern. On E1, AEA attains the significantly lowest terminal PICS, improving over AEA-noSP, TS-KKT-IDS, TTTS, and the other baselines; this shows a substantial benefit from the saddlepoint prefactor in this asymmetric exponential instance. On the exponential slippage instance E2, AEA-noSP is significantly best and AEA is second, ahead of all non-proposed methods. Thus, in these symmetric slippage test instances, AEA-noSP benefits because the true prefactor ratios carry no challenger-specific information; in contrast, G3 and E1 show that saddlepoint weighting is beneficial when lower-order differences are genuine. The ablation therefore strengthens the contribution: AEA-noSP validates the annealed soft-min allocation mechanism, and full AEA demonstrates the additional value of saddlepoint weighting in heterogeneous or asymmetric cases.

Table~\ref{tab:cpu_time} shows that the proposed methods are also computationally competitive. AEA-noSP is faster than AEA, as expected, because it avoids computing the saddlepoint prefactors. AEA adds modest absolute overhead for computing the saddlepoint prefactors and remains much faster than BOLD and TTTS on most instances, while being comparable to OEA and TS-KKT-IDS. On the exponential instances, AEA and AEA-noSP are especially favorable: they are far faster than BOLD and TTTS and close to the fastest LD-based competitors. We omit wall-clock times for TS and equal allocation because their computational costs are negligible.

\begin{figure}[!t]
\centering
\begin{minipage}{0.48\linewidth}
    \centering
    \includegraphics[width=\linewidth,trim={1pt 1pt 1pt 1pt},clip]{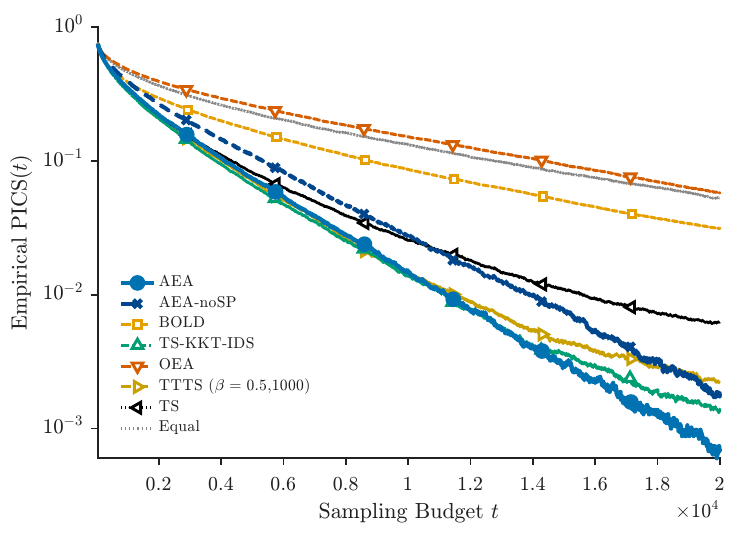}\\[-1pt]
    {\footnotesize (a) E1}
\end{minipage}
\hfill
\begin{minipage}{0.48\linewidth}
    \centering
    \includegraphics[width=\linewidth,trim={1pt 1pt 1pt 1pt},clip]{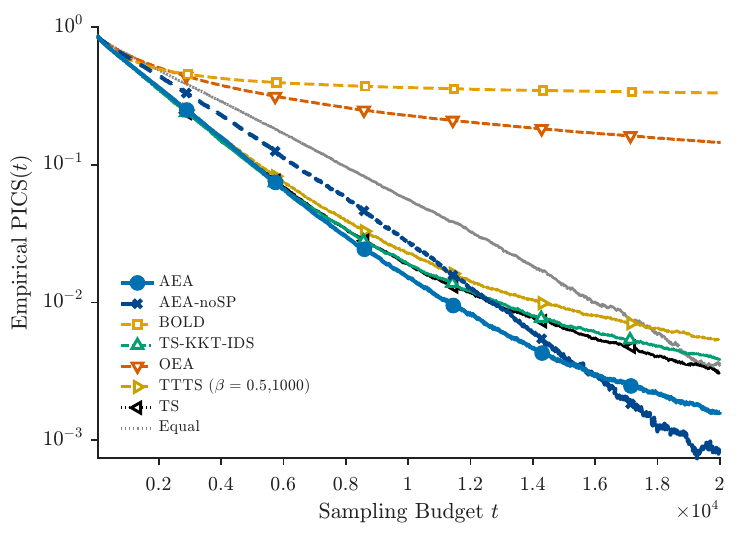}\\[-1pt]
    {\footnotesize (b) E2}
\end{minipage}
\caption{Empirical PICS on exponential test instances.}
\label{fig:exp_results}
\end{figure}

\begin{table}[!t]
\centering
\caption{Terminal empirical PICS. Values are percentages with 95\% Wilson confidence intervals.}
\label{tab:pics_wilson}
\begin{threeparttable}
\scriptsize
\setlength{\tabcolsep}{1pt}
\renewcommand{\arraystretch}{1.08}
\newcommand{\wci}[3]{\makecell[c]{#1\\[-1pt]\tiny [#2,#3]}}
\newcommand{\bestwci}[3]{\makecell[c]{\textbf{#1}\\[-1pt]\tiny [#2,#3]}}
\newcommand{\bestwcisig}[3]{\makecell[c]{\textbf{#1}$^{*}$\\[-1pt]\tiny [#2,#3]}}
\begin{tabular*}{\textwidth}{@{\extracolsep{\fill}}l*{10}{c}@{}}
\toprule
\textbf{Inst.}
& \textbf{AEA}
& \makecell{\textbf{AEA-noSP}}
& \textbf{BOLD}
& \makecell{\textbf{TS-KKT-IDS}}
& \textbf{OEA}
& \textbf{FCBA}
& \makecell{\textbf{OCBA-MSA}}
& \textbf{TTTS}
& \textbf{TS}
& \textbf{Equal} \\
\midrule
G1
& \wci{0.006}{0.002}{0.018}
& \wci{0.010}{0.004}{0.023}
& \wci{0.040}{0.026}{0.062}
& \wci{0.014}{0.007}{0.029}
& \wci{0.684}{0.616}{0.760}
& \wci{0.220}{0.183}{0.265}
& \bestwci{0.004}{0.001}{0.015}
& \wci{0.044}{0.029}{0.067}
& \wci{0.220}{0.183}{0.265}
& \wci{2.048}{1.928}{2.176}
\\
G2
& \wci{0.032}{0.020}{0.052}
& \bestwcisig{0.014}{0.007}{0.029}
& \wci{17.496}{17.165}{17.832}
& \wci{0.150}{0.120}{0.188}
& \wci{8.196}{7.959}{8.440}
& \wci{1.604}{1.498}{1.718}
& \wci{17.240}{16.911}{17.574}
& \wci{0.298}{0.254}{0.350}
& \wci{0.188}{0.158}{0.230}
& \wci{0.078}{0.057}{0.107}
\\
G3
& \bestwcisig{0.008}{0.003}{0.021}
& \wci{0.024}{0.014}{0.042}
& \wci{0.522}{0.463}{0.589}
& \wci{0.168}{0.136}{0.208}
& \wci{0.230}{0.192}{0.276}
& \wci{0.444}{0.389}{0.506}
& \wci{0.204}{0.168}{0.248}
& \wci{0.042}{0.027}{0.064}
& \wci{0.098}{0.074}{0.130}
& \wci{3.808}{3.644}{3.979}
\\
G4
& \bestwci{0.006}{0.002}{0.018}
& \bestwci{0.006}{0.002}{0.018}
& \wci{0.070}{0.050}{0.097}
& \wci{0.014}{0.007}{0.029}
& \wci{1.484}{1.382}{1.594}
& \wci{0.224}{0.186}{0.269}
& \wci{1.004}{0.920}{1.095}
& \wci{0.084}{0.062}{0.114}
& \wci{0.298}{0.254}{0.350}
& \wci{29.440}{29.042}{29.841}
\\
E1
& \bestwcisig{0.068}{0.049}{0.095}
& \wci{0.178}{0.145}{0.219}
& \wci{3.136}{2.987}{3.292}
& \wci{0.138}{0.109}{0.175}
& \wci{5.794}{5.593}{6.002}
& ---
& ---
& \wci{0.218}{0.181}{0.263}
& \wci{0.614}{0.549}{0.686}
& \wci{5.260}{5.068}{5.459}
\\
E2
& \wci{0.160}{0.129}{0.199}
& \bestwcisig{0.082}{0.060}{0.111}
& \wci{33.220}{32.808}{33.634}
& \wci{0.388}{0.337}{0.446}
& \wci{14.510}{14.204}{14.821}
& ---
& ---
& \wci{0.544}{0.483}{0.612}
& \wci{0.302}{0.258}{0.354}
& \wci{0.358}{0.309}{0.414}
\\
\bottomrule
\end{tabular*}
\begin{tablenotes}[flushleft]
\footnotesize
\item \textit{Notes.} Boldface = row minimum. 
$^{*}$ = significantly lower than all others 
(one-sided Fisher, Holm-adjusted, $\alpha=0.05$).
\end{tablenotes}
\end{threeparttable}
\end{table}

\begin{table}[!t]
\centering
\caption{Mean wall-clock time per macro-replication.}
\label{tab:cpu_time}
\begin{threeparttable}
\begingroup
\scriptsize
\setlength{\tabcolsep}{1pt}
\renewcommand{\arraystretch}{1.08}
\newcommand{\meanse}[2]{#1\,{\scriptsize(#2)}}
\newcommand{\bestmeanse}[2]{\textbf{#1\,{\scriptsize(#2)}}}
\begin{tabular*}{\linewidth}{@{\extracolsep{\fill}}lcccccccc@{}}
\toprule
\textbf{Inst.}
& \textbf{AEA}
& \textbf{AEA-noSP}
& \textbf{BOLD}
& \makecell{\textbf{TS-KKT-IDS}}
& \textbf{OEA}
& \textbf{FCBA}
& \makecell{\textbf{OCBA-MSA}}
& \textbf{TTTS} \\
\midrule
G1 
& \meanse{1.700}{0.009}
& \meanse{1.412}{0.025}
& \meanse{5.620}{0.014}
& \meanse{2.288}{0.004}
& \meanse{2.440}{0.002}
& \meanse{3.088}{0.003}
& \bestmeanse{1.242}{0.002}
& \meanse{17.052}{0.019} \\
G2 
& \meanse{3.474}{0.012}
& \meanse{2.760}{0.009}
& \meanse{11.775}{0.056}
& \meanse{4.314}{0.016}
& \meanse{4.364}{0.009}
& \meanse{5.493}{0.010}
& \bestmeanse{2.567}{0.013}
& \meanse{17.248}{0.057} \\
G3 
& \meanse{3.886}{0.013}
& \bestmeanse{1.282}{0.003}
& \meanse{36.469}{0.063}
& \meanse{21.663}{0.016}
& \meanse{11.767}{0.011}
& \meanse{13.928}{0.014}
& \meanse{3.020}{0.013}
& \meanse{37.346}{0.066} \\
G4 
& \meanse{31.816}{0.014}
& \meanse{26.408}{0.009}
& \meanse{50.727}{0.177}
& \meanse{30.325}{0.065}
& \bestmeanse{25.471}{0.013}
& \meanse{40.678}{0.014}
& \meanse{27.135}{0.023}
& \meanse{65.614}{0.179} \\
E1 
& \meanse{12.211}{0.013}
& \meanse{8.832}{0.008}
& \meanse{89.833}{0.090}
& \bestmeanse{7.888}{0.020}
& \meanse{13.058}{0.029}
& ---
& ---
& \meanse{366.555}{1.267} \\
E2 
& \meanse{11.135}{0.017}
& \meanse{8.899}{0.009}
& \meanse{57.521}{0.138}
& \bestmeanse{8.391}{0.021}
& \meanse{10.333}{0.015}
& ---
& ---
& \meanse{138.522}{0.562} \\
\bottomrule
\end{tabular*}
\endgroup
\begin{tablenotes}[flushleft]
\footnotesize
\item \textit{Notes.} Entries are mean wall-clock times in milliseconds; standard errors are shown in parentheses. Boldface indicates the row minimum. Measured in Julia 1.12.6 on an Apple M2 Pro.
\end{tablenotes}
\end{threeparttable}
\end{table}

\section{CONCLUSION}
\label{sec:conclusion}

We introduced annealed entropic allocation, an adaptive plug-in policy motivated by a large-deviations framework using a smooth surrogate of the hard maximin objective. By replacing hard active-challenger selection with an annealed weighted soft minimum, AEA gives near-active challengers positive finite-budget weight while still approaching the classical hard-min LD objective. Saddlepoint prefactors provide subexponential, lower-order challenger weights and therefore do not change the first-order LD target. Our theory establishes preservation of the static LD-optimal target, continuity of the fixed-weight target-allocation map, concentration of Gibbs weights on active challengers, and a cumulative-tracking guarantee. The no-saddlepoint ablation shows that soft-min aggregation is the main source of empirical robustness, while saddlepoint weighting can add gains in heterogeneous or asymmetric instances.

These results provide a static LD justification for the surrogate and a tracking guarantee for the adaptive implementation. Future work includes proving convergence and exponential PICS decay for the adaptive procedure, reporting historical allocation-ratio trajectories, studying slow recovery from early estimation error, and assessing forced-exploration or allocation-floor safeguards. Other directions include annealing and initialization sensitivity, probability-of-good-selection criteria, common-random-number settings, structured or continuous design spaces, input-data uncertainty, and fixed-confidence uses of the saddlepoint prefactor.

\bibliographystyle{unsrtnat}
\bibliography{references}

\end{document}